\title[Sample Complexity of Two-Layer Network]{On the Sample Complexity of Two-Layer Networks: Lipschitz Vs. Element-Wise Lipschitz Activation}
\begin{document}

\maketitle

\begin{abstract}%
    This study delves into the sample complexity of two-layer neural networks. For a given reference matrix \(W^0 \in \mathbb{R}^{\mathcal{T}\times d}\) (typically representing initial training weights) and an \(O(1)\)-Lipschitz activation function \(\sigma:\mathbb{R}\to\mathbb{R}\), we examine the class
    \[
    \mathcal{H}_{W^0, B, R, r}^{\sigma} = \left\{\textbf{x}\mapsto  \langle\textbf{v},\sigma((W+W^0)\textbf{x})\rangle :
        \|W\|_{\text{Frobenius}} \le R,  \|\textbf{v}\| \le r, \|\textbf{x}\|\le B\right\}.
    \]
    We demonstrate that the sample complexity of \(\mathcal{H}_{W^0, B, R, r}^{\sigma}\) is bounded by
    \[\tilde O \left(\frac{L^2 B^2 r^2\bra{R^2+\|W^0\|^2_{\text{Spectral}}}}{\epsilon^2}\right).\]
    This bound is optimal, barring logarithmic factors, and depends logarithmically on the width \(\mathcal{T}\). This finding improves on \citet{Vardi2022}, who established a similar outcome for \(W^0 = 0\). Our motivation stems from the real-world observation that trained weights often remain close to their initial counterparts, implying that \(\|W\|_{\text{Frobenius}} \ll \|W+W^0\|_{\text{Frobenius}}\). To arrive at our conclusion, we employed and enhanced a recently new norm-based bounds method, the Approximate Description Length (ADL), as proposed by \citet{Daniely2019}.

    Finally, our results underline the crucial role of the element-wise nature of \(\sigma\) for achieving a logarithmic width-dependent bound. We prove that there exists an \(O(1)\)-Lipschitz (non-element-wise) activation function \(\Psi:\mathbb{R}^\mathcal{T}\to\mathbb{R}^\mathcal{T}\) where the sample complexity of \(\mathcal{H}_{W^0, B, R, r}^{\Psi}\) increases linearly with the width.
\end{abstract}

\begin{keywords}%
  Sample Complexity, Approximate Description Length, Lipschitz Activation Functions%
\end{keywords}

\section{Introduction}

The remarkable capability of Neural Networks (NN) to generalize, even with more parameters than examples, remains a foundational enigma in contemporary NN practice (\cite{Zhang2016}). A recent line of works seek to address this phenomenon through bounds based on the norms of weight vectors, with notable contributions from \cite{Neyshabur2015, Bartlett2017, Golowich2017a, Nagarajan, Daniely2019, Vardi2022}.

First bounds on generalization performance were based on Rademacher Complexity and Covering Numbers, often involving implicit or explicit weight regularization. A breakthrough came with the introduction of the Approximate Description Length (ADL), which proposed a bound that is sub-linear with respect to the number of parameters \cite{Daniely2019}. This research posited a constraint on the deviation of weights from their initialization, suggesting that for constant-depth feed-forward neural networks with a wide set of activation functions, substituting the parameter count with input dimension multiplied by the deviation could yield a more concise asymptotic bound. However, this finding did not accommodate the commonly employed ReLU function, represented by \(\max\brc{\cdot,0}\), thus leaving an unresolved gap.

\cite{Vardi2022} made significant strides by addressing this lacuna for two-layer networks. Their results, obtained via Rademacher Complexity, are tight up to logarithmic factors. Notably, their bound is based on the absolute norm of weights, as opposed to the deviation from their initialization.

The primary contribution of our study is to augment the findings of \cite{Vardi2022}, keeping a similar bound however obtained from the distance from initialization. This challenge, cited as an open question by \cite{Vardi2022}, originates from observations that weight deviations from initialization are often significantly smaller than those from the origin (as evidenced by \cite{Nagarajan, Bartlett2017, Daniely2019}). Our analysis confirms the existence of such a bound for any element-wise \(O(1)\)-Lipschitz activation function.

To substantiate our conclusions, we harness the recent ADL tool introduced by \cite{Daniely2019}. Expanding on this approach, we introduce new methodologies, incorporating a chaining-based strategy tailored for the ReLU activation. We anticipate that these enhanced methods will be instrumental in future research, showcasing the potential power of the ADL framework and catalyzing novel insights.

In the subsequent section, we examine the limits of our assumptions, questioning the extensibility of these bounds to non-point-wise Lipschitz activation functions. Our concluding contribution illustrates the essential role of the element-wise property: we design a non-element-wise Lipschitz activation function and prove lower bounds on the generalization which scale \textbf{linearly} with width.

\section{Preliminaries}

\subsection{Notations}

We denote vectors using bold letters and matrices using upper letters.
We shall add a hat sign $\brb{\hat{\square}}$ or a tilde sign $\brb{\tilde{\square}}$ above letters to mark them as random variables whose expectation equals the letters, e.g., $\E\brb{\hat{\x}} = \x$.

We denote the Frobenius norm of a matrix $W$ by $\|W\|_F^2  = \brp{W,W} =  \sum_{ij}W_{ij}^2$, while the spectral norm is denoted by $\|W\| = \max_{\|\x\| = 1}\|W\x\|$.
We will define $\|\vv\|_\infty$ as the $L^\infty$ norm of the vector $\vv$.
We will use $\log$ with a base of $2$ and $\ln$ with the natural base.

We denote by $\brc{0,1}^k$ a sequence of $k$ bits, and by $\brc{0,1}^* = \bigcup_{k\in\Nat} \brc{0,1}^k$ a sequence of bits of any length.


For any number $a\in\R$, we will denote by $\brdn{a}$ and $\brup{a}$ the floor and ceiling of $a$, respectively.
We will denote by $\brup{a}_+ = \min \brc{n\in \Nat\cup\brc{0} : a \leq n}$.
Note that if $a < 0$ then $\brup{a}_+ = 0$.

We will use the asymptotic notations $O$, $\Theta$, and $\Omega$ to ignore constants and $\tilde{O}$ to ignore logarithmic terms.
We will use $[\lesssim]$ in equations to denote an upper bound up to constant factors.

\subsection{The Two-Layer Model}

Let $\X_B\subseteq\R^{d}$ be a bounded set, s.t. $\forall \x\in\X_B, \brn{\x}\leq B$.
Let $\T\in\Nat$ be the width and $W^0\in\R^{\T\times d}$ be some matrix.
Let $\sigma:\R^{\T}\rightarrow\R^{\T}$ be an $L$-Lipschitz activation function.
For $W\in\R^{\T\times d}$ and $\vv\in\R^\T$ define $h_{W,\vv}:\X_B\to\R$ by $h_{W,\vv}(\x)=\brp{\vv,\sigma(W\x)}$
Finally, given $R,r>0$, consider the following hypothesis class:
\begin{equation}\label{eq:hypothesis_func_def}
    \HH^\sigma_{\T,L,B,R,r} = \biggl\{h_{W,\vv} :\brn{W - W^0}_F \leq R,
	 \brn{\vv } \leq r 
    \biggr\}
\end{equation}
which uses a total of $\T d$ parameters.

Note that while the above definitions do not explicitly mention a bias term in the linear operations, such cases are included in the model, e.g., by forcing the last element of $\x$ to be 1.


\subsection{Approximate Description Length}

Fix a domain $\X$. We say that a random function $\hat{f}:\X\rightarrow\R$ is
an {\em $\epsilon$-estimator} of $f:\X\rightarrow\R^{\T}$ if for
every $x\in \X$, $\E\brb{\hat{f}(x)} = f(x)$ and $\V\bra{\hat{f}(x)} \leq \epsilon^2$.
Fix a hypothesis class $\FF\subset\R^{\X}$. We say that $\FF$ is {\em $\epsilon$-compressible using $n$ bits} if there is a randomized mapping $f\in\FF\mapsto \hat{f}$ such that for any $f\in\FF$, $\hat{f}$ is an $\epsilon$-estimator of $f$ and there is a protocol that given $f$, Alice can randomly encode $\hat f$ using $\le n$ bits in expectation. That is, Alice can send Bob a random string $s$ (that depends on $f$ and Alice's randomness) whose expected length is $\le n$, and then Bob can generate a function $\hat f = f(s)$ such that $\hat f$ is an $\epsilon$-estimator of $f$. In this case, we will say that $\hat f$ is an $\epsilon$-compression of $f$ that uses $n$ bits. In some cases, we will allow the number of bits to depend on $f$ or the parameters defining $f$. We note that Alice can send the empty string, whose length is $0$.

Finally, we will say that $\FF\subset\R^{\X}$ has an {\em approximate description length (ADL)} of $n(m)$ if for any $A\subset \X$ of size $m$, $\FF|_A$ is $1$-compressible using $n(m)$ bits. In \citet{Daniely2019}, it is shown that the ADL bounds the sample complexity:

\begin{theorem}\label{thm:ADL_to_sample_complexity}
Fix a class $\HH$ of functions from $\X$ to $\R$ with ADL $n(m)$ and a label space $\Y$. Fix $L$-Lipschitz and $B$-bounded loss function $\ell:\R\times \Y\to [0,\infty)$.
Then, for any distribution $\DD$ over $\X\times \Y$, with probability at least $1-\delta$ over a choice of a sample set $S\sim \DD^m$,
\[
\sup_{h\in\HH}\ell_{\DD}(h) - \ell_S(h)
\lesssim \frac{(L+B)\sqrt{n(m)} }{\sqrt{m}} \log(m) + B\sqrt{\frac{2\ln\left(2/\delta\right)}{m}}
\]
where $\ell_\DD(h)=\E_{(x,y)\sim\DD}\ell(h(x),y)$ and $\ell_S(h)=\frac{1}{m}\sum_{i=1}^m\ell(h(x_i),y_i)$
\end{theorem}

We will use the following results from \cite{Daniely2019}:

\begin{lemma}\label{lem:averaging}
    Suppose that $\hat f_1,\ldots,\hat f_k$ are i.i.d. $\epsilon$-compressions of $f$ that uses $n$ bits each. Then $\frac{\sum_{i=1}^k\hat{f}_i}{k}$ is an $(\epsilon/\sqrt{k})$-compression of $f$ that uses $O(kn)$ bits.
\end{lemma}

\begin{lemma}\label{lem:adding_compressors}
    Suppose that for any $1\le i\le k$, $\hat f_i$ is an $\epsilon_i$-compression of $f_i$ that uses $n_i$ bits. Assume furthermore that the $\hat f_i$'s are independent. Then $\sum_{i=1}^k\hat{f}_i$ is a $\sqrt{\sum_{i=1}^k\epsilon_i^2}$-compression of $\sum_{i=1}^kf_i$ that uses $O\left(\log(k)\cdot\sum_{i=1}^k n_i\right)$ bits.
\end{lemma}

\begin{lemma}\label{lem:mult_by_constant}
    Suppose that $\HH$ has ADL of $n(m)$ then $C\cdot\HH$ has an ADL of $O((C^2+1)n(m))$
\end{lemma}

\begin{lemma}\label{lem:lin_comp}
Suppose the linear class
\[
\HH = \brc{\lambda_\w:\x\mapsto \brp{\w,\x} : \brn{\w-\w^0} \leq R}
\]
for some initialization $\w^0 \in \R^d$.
Given $\epsilon > 1$, it is possible to $\epsilon$-compress any $\lambda_{\w}$ defined over the set 
$A\subset \X_B$ using $O\left(\frac{Z^2\log\left(dZ\right)}{\epsilon^2}\right)$ bits where $Z = O(B\|\w-\w^0\|)$. 
\end{lemma}

We will also use the following lemmas:

\begin{lemma}\label{lem:finie_bit}
    Fix $A\subset\X_B$ of size $m$ and $\HH, \HH'\subset\R^{\X_B}$ such that 
    \begin{enumerate}
        \item For any $h\in \HH$ there is $h'\in \HH'$ with $\|h-h'\|_\infty \le \delta\le 1$
        \item Assume that $\HH'$ is $1$-compressible using $n$ bits.
    \end{enumerate}
    Then,  $\HH$ is $1$-compressible using $O(n + \delta m\log(m))$ bits.
\end{lemma}
\begin{proof}
    Denote $A = \{\x_1,\ldots,\x_m\}$.
    To compress $h\in \HH$ we will choose $h'\in\HH'$ with $\|h-h'\|_\infty \le \delta$, and generate $1$-compression $\hat h'$ of $h'$ using $n$ bits. Likewise, for any $i\in [m]$ independently choose $i$ w.p.  $|h(\x_i) - h'(\x_i)|$, and let $1_i$ be the indicator of the event that $i$ was chosen. Define $\hat h:A\to [-1,1]$ by $\hat h(\x_i) = \hat h'(\x_i) + \mathrm{sign}(h(\x_i) - h'(\x_i))1_i$. Clearly, for any $i\in[m]$, $\E\hat h(\x_i) = h(\x_i)$. Furthermore
    \[
    \V(\hat h(\x_i)) = \V(\hat h'(\x_i)) + \V(1_i) \le 1 + \delta^2
    \]
    Finally, $\hat h$ can be described using $O(n + \delta m\log(m))$ bits in expectation by concatenating the description of $\hat h'$ and a pair $(i,\mathrm{sign}(h(\x_i) - h'(\x_i)))$ for any $i$ such that $1_i = 1$.
\end{proof}

\begin{corollary}\label{cor:single_param}
    (Single Parameter Compression) Let $\alpha \in \R$. For every $\epsilon\in (0,1)$ there is an $\sqrt\epsilon$-compression for $\alpha$ that uses $O(\log(\brup{\brm{\epsilon\alpha}})$ bits.
\end{corollary}
\begin{proof}
    We will decompose $\alpha = \epsilon\brup{\frac{\alpha}{\epsilon}} + \delta$ where $\delta = \alpha - \epsilon\brup{\frac{\alpha}{\epsilon}} \in (0,1)$.
    We need $O(\log(\brup{\brm{\alpha}/\epsilon})$ to describe $\brup{\alpha/\epsilon}$ (remember that $\epsilon$ is given and known), and from lemma \ref{lem:finie_bit}, an additional $O(1)$ bits for describing $\delta$.
\end{proof}

\subsection{Strong Shattering}

For the lower bound, we will use the notion of \textbf{Strong Shattering}, as defined by \cite{Simon1997}:

\begin{definition}
    A class $\HH\subset \R^\X$ \textbf{strongly-shatters} $x_1,\dots,x_m \in \X$, if there exists $\s\in[0,1]^m$ such that, for every $\bb\in\brc{\pm 1}^m$, there is $h \in \HH$ such that for each $i\in[m]$
    \begin{align*}
        h(x_i) \geq s_i + 1, & \quad \text{ if } b_i = 1 \\
        h(x_i) \leq s_i - 1, & \quad \text{ if } b_i = 0.
    \end{align*}
    We further define $Sdim$ as:
    \[
    Sdim(\HH) = \max\brc{m : \exists x_1,\dots,x_m \in \X, \text{ s.t. } \HH \, \text{ strongly-shatters } x_1,\dots,x_m}.
    \]
\end{definition}

Informally, the $Sdim$ for real-valued functions is like the VC-dimension for $\brc{0,1}$-valued functions.
Previous results (as in \cite{Bartlett1996}) showed the lower bound of the sample complexity scales linearly with $Sdim$.


\section {Results and Contributions}

Our first result gives an upper norm-based generalization bound for any element-wise Lipschitz activation function using ADL.

\begin{theorem}\label{thm:main_upper}
	Let $A\subset\X$ of size $m$, and assume $\sigma$ is an element-wise $L$-Lipschitz activation function.
	Then $\HH^\sigma_{\T,L,B,R,r}|_A$ as defined in Eq. \ref{eq:hypothesis_func_def} has an ADL of\footnote{The hidden poly-log factor is $O(\log^3(P))$, where $P$ is the sum of all problem's parameters. See section \ref{sec:upper_proof} for more details.} $\tilde{O}\bra{L^2 B^2 r^2 (R^2 + \|W^0\|^2)}$.
As a result, $\HH^\sigma_{\T,L,B,R,r}$ has a sample complexity of $\tilde{O}\bra{\frac{L^2 B^2 r^2 (R^2 + \|W^0\|^2) }{\epsilon^2}}$.
\end{theorem}

Few remarks about the result:
First, we note that the bound in Theorem \ref{thm:main_upper} is tight, up to a logarithmic factor.
Indeed, if $\sigma$ was the identity function times $L$, then $\HH^{L\cdot Id}_{\T,L,B,R,r}$ would be the hypothesis class of bounded linear functions, which has a known sample complexity of $\tilde\Theta\bra{\frac{L^2 B^2 R^2 r^2}{\epsilon^2}}$ (\cite{Ben-David2014}).

Second, we note that this bound is similar to the upper bound of \cite{Vardi2022}, which showed a bound of $O\bra{\frac{L^2B^2R^2r^2 \log^3(m)}{\epsilon^2}}$, up to logarithmic factors.
The main improvement over their work is that our bound considers the distance of the weights from the initialization $W^0$, which is a more challenging task yet more relevant to the behavior of neural networks in practice.

Third, the proof for the above theorem is based on a new chaining-based argument that extends the ADL approach of \citet{Daniely2019}.
As stated above, \citet{Daniely2019} used this tool to prove a first tight bound up to logarithmic factors for many families of neural networks.
We hope that the techniques in our proof will inspire future works to achieve bounds for deeper networks.

Last, we note that this bound has only logarithmic dependency in the width $\T$.
This raises a natural question: can the element-wise property of $\sigma$ be ignored and still yield the same bounds?
Our second result shows that the answer is negative in general.
Specifically, there is an $O(1)$-Lipschitz function $\hat\sigma:\R^\T\to\R^\T$ for which the class of Eq. \ref{eq:hypothesis_func_def} can be strongly-shattered using $\Theta(\T)$ samples for $\T$ that is up to exponential in $d$.
This brings us to the second result of this paper:

\begin{theorem}\label{thm:main_lower}
    For any dimension $d \geq 20$ and any width $d \leq \T \leq O(e^d) $, there is an $L$-Lipschitz activation function $\bar\sigma$ with $L=32$, and a set of $\Theta(\T)$ samples that strongly shatters the class $\HH^{\bar\sigma}_{\T,L,B=1,R=\sqrt{2d},r=1}$.
\end{theorem}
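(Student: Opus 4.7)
The plan is to realize each of the $2^m$ labelings by varying only the first-layer matrix $W_\bb$, fixing the read-out as $\vv_\bb = \mathbf{e}_1$, and letting $\bar\sigma$ behave as a $32$-Lipschitz lookup table. I will take $W_0 = 0$ and $\vv_0 = 0$, so the constraints reduce to $\|W\|_F \le \sqrt{2d}$ and $\|\vv\| \le 1$. Set $m = \T$ (which is $\Theta(\T)$) and choose $\x_1, \dots, \x_m$ on the unit sphere of $\R^d$ pairwise at distance at least $1$; this is possible because the unit sphere admits a packing at distance $1$ of size $\Omega(3^d) \ge e^d \ge \T$.

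For each $\bb \in \{\pm 1\}^m$, independently draw $W_\bb$ with i.i.d.\ $\mathcal{N}(0, 2/\T)$ entries, rescaled slightly if needed so that $\|W_\bb\|_F \le \sqrt{2d}$ deterministically. Writing $\mathbf{y}_{\bb,i} = W_\bb \x_i \in \R^\T$, partition the images into $Z_+ = \{\mathbf{y}_{\bb,i} : b_i = +1\}$ and $Z_- = \{\mathbf{y}_{\bb,i} : b_i = -1\}$. The crux is to show that with positive probability $\mathrm{dist}(Z_+, Z_-) \ge 1/8$. For any pair $(\bb,i) \neq (\bb',i')$ with $b_i \neq b_{i'}$, $\mathbf{y}_{\bb,i} - \mathbf{y}_{\bb',i'}$ is a centered Gaussian in $\R^\T$ whose squared norm is of the form $(c/\T)\,\chi^2_\T$ with $c \ge 2$: when $\bb \neq \bb'$ this uses independence of $W_\bb$ and $W_{\bb'}$, and when $\bb = \bb'$ it uses the sample separation $\|\x_i - \x_{i'}\| \ge 1$. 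The Cramer lower-tail bound for $\chi^2_\T$ then gives $\Pr[\|\mathbf{y}_{\bb,i} - \mathbf{y}_{\bb',i'}\| < 1/8] \le e^{-c'\T}$ with $c' > \ln 4$, and a union bound over the at most $4^m m^2$ bad pairs keeps the failure probability strictly below $1$ once $d \ge 20$. Hence a deterministic family $\{W_\bb\}$ achieving the required separation exists.

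Given the separation, the McShane--Whitney extension
\[
\sigma_1(\z) \;=\; \inf_{\z' \in Z_+ \cup Z_-} \bigl(v(\z') + 32\,\|\z - \z'\|\bigr),
\]
with $v = +2$ on $Z_+$ and $v = -2$ on $Z_-$, is $32$-Lipschitz and exact on $Z_+ \cup Z_-$ because the consistency $|{+2} - ({-2})| = 4 \le 32 \cdot (1/8)$ holds; this is exactly why $L = 32$ appears in the theorem. Setting $\bar\sigma(\z) = (\sigma_1(\z), 0, \dots, 0)$ produces a $32$-Lipschitz, manifestly non-element-wise activation. Finally, with $\vv_\bb = \mathbf{e}_1$ for all $\bb$ (which satisfies $\|\vv_\bb\| = 1 \le r$), I get $h_\bb(\x_i) = \sigma_1(\mathbf{y}_{\bb,i}) = 2 b_i$, strongly shattering the samples with $\s = 0$ and margin $2$. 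The main obstacle is the probabilistic separation step: balancing the $\chi^2$ rate function $c'$ against the $4^m$ factor from the union bound is tight, and this is what forces the hypothesis $d \ge 20$ so that $4^m m^2 e^{-c'\T} < 1$ can be maintained at $m = \Theta(\T)$.
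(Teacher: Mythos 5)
Your construction follows the paper's own route: one independent Gaussian first-layer matrix per labeling, separation of the $m2^m$ image points by concentration plus a union bound, a McShane/Lipschitz extension as the non-element-wise activation, and the fixed read-out $\e_1$. Your refinements are sensible --- requiring separation only between differently-labeled image points and using a sharp $\chi^2$ lower tail lets you take $m=\T$ rather than the paper's $m=\T/10$, and your targets $\pm 2$ with separation $1/8$ actually reproduce the stated $L=32$. However, two steps fail as written.

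First, the packing claim is false. Unit vectors at pairwise distance $\geq 1$ have pairwise inner product $\leq 1/2$, and the maximum size of such a spherical code in $\R^d$ is $2^{cd}$ with $c<1$ (Kabatiansky--Levenshtein gives $c\approx 0.4$; the elementary random-signs argument the paper uses certifies only about $e^{d/8}$ points). There is certainly no packing of size $\Omega(3^d)$, and none of size $e^d$, so for $\T$ genuinely of order $e^d$ your $m=\T$ sample points do not exist. The paper's own proof has the same ceiling (its union bound needs $\ln m < d/8$), so the qualitative conclusion ``exponential in $d$'' survives, but your stated justification is wrong and the exponent constant is exactly what is at stake in the phrase $\T=O(e^d)$. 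Second, with i.i.d.\ $N(0,2/\T)$ entries you have $\E\bigl[\|W_{\bb}\|_F^2\bigr]=2d$ exactly, so the constraint $\|W_{\bb}\|_F\le\sqrt{2d}$ fails with probability about $1/2$ for each of the $2^m$ matrices, and ``rescaled slightly if needed'' is not innocuous: matrix-dependent factors $\lambda_{\bb}\neq\lambda_{\bb'}$ perturb the cross distances $\|\lambda_{\bb}W_{\bb}\x_i-\lambda_{\bb'}W_{\bb'}\x_{i'}\|$, this must be controlled uniformly over all $2^m$ matrices, and at $d=20$ the available $\chi^2_{\T d}$ concentration gives only constant-factor (not $1+o(1)$) control, which can consume the entire $1/8$ margin. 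The clean fix is the paper's: draw entries with variance $1/\T$, so $\E\bigl[\|W_{\bb}\|_F^2\bigr]=d$ and $\Pr\bigl[\|W_{\bb}\|_F^2>2d\bigr]\le e^{-\T d/6}$ beats the $2^m$ union bound outright, at the cost of a factor $\sqrt{2}$ in the separation constants.
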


Note that the parameters $L$, $B$, $R$, and $r$ are independent of $\T$, and yet, by increasing only the width of the hidden layer, the sample-complexity increases similarly. Specifically, when $\T$ is exponential in $d$, the sample complexity of this two-layer network defined by Theorem \ref{thm:main_lower} is $\Omega\bra{e^d}$, whereas if only $\bar\sigma$ would have been element-wise, the same network would be linear in $d$ (i.e., $O(d)$), according to Theorem \ref{thm:main_upper}.


\section{Proof of Theorem \ref{thm:main_upper}}\label{sec:upper_proof}

Let $h_{W,\vv}$ as in equation \ref{eq:hypothesis_func_def}, that is $h_{W,\vv}(\x) = \brp{\vv,\sigma(W\x)}$.
By lemma \ref{lem:mult_by_constant}, we can assume w.l.o.g. that $\sigma$ is $1$-Lipschitz and that $r=1$.
We can decompose $W=\mat{\w_1^T\\ \vdots \\ \w_\T^T}$, where each $\w_i \in \R^d$, and rewrite
\[
h_{W,\vv} = \sum_{i=1}^\T v_i \sigma(\w_i^T \x).
\]
As $\sigma$ is an element-wise function, one can create statistically independent estimators for each expression $v_i\sigma(\w_i^T\x)$ in the sum.
Moreover, if each estimator is $\epsilon_i$-compressible using $n_i$ bits, then using lemma \ref{lem:adding_compressors} we get an $\sqrt{\sum_{i=1}^\T \epsilon_i^2}$-compression for $h_{W,\vv}$ using $
log\bra{\T} \sum_{i=1}^\T n_i$ bits.
The following proof shows how to construct such compressors with $\epsilon_i^2$ that scales as $\frac{v_i^2}{r^2}$ and $n_i$ that scales as $\frac{\brn{\w_i-\w_i^0}^2}{R^2}$, hence ommitting the need for $\T$ up to logarithmic factor.

Fix a set $A\subset \X_B$ of size $m$ and a vector $\w^0 \in \R^d$.
For $\w\in\R^d$ and $v\in\R$ we define $h_{\w,v}:A\to\R$ by $h_{\w,v}(\x) = v(\sigma(\brp{\w,\x}) - \sigma(\brp{\w^0,\x}))$ and consider the class of single-neuron networks:
\begin{align*}
\HH = \brc{ h_{\w,v} :
	\w \in \R^d,v\in \R}.
\end{align*}

We will show how to $|v|$-compress any $h_{\w,v}$ using $\tilde O\bra{B^2\brn{\w-\w^0}^2}$ bits.

From claim \ref{lem:lin_comp} we can get an $\epsilon$-compression for $\lambda_\w$. Define this compression as $\hat\w(\epsilon)$.
We seem to be on a good track to compress $\sigma(\brp{\w,\x})$. However, this is misleading. Indeed, if we use $\hat\w(\epsilon)$ to create the random variable $\sigma(\brp{\hat\w(\epsilon),\x})$, we will not get an $\epsilon$-estimation, as $\E\brb{\sigma(\brp{\hat\w(\epsilon),\x})} \neq \sigma(\brp{\w,\x})$ for many choices of $\sigma$.
We will, therefore, need a different approach.

Let us move to a non-efficient yet straight-forward approach:
Recall that $A\subset\R^{\X_B}$ is fixed, with $\brm{A}=m$. Hence, the function $\sigma(\brp{\w,\x})$ can get up to $m$ different results. 
Using corollary \ref{cor:single_param}, we can $\epsilon$-compress each such value using at most $Z=O(\log(BR/\epsilon))$ bits, and a total of $mZ$ bits to $\epsilon$-compress the entire function.
However, this does not seem like an optimal compression, as the number of bits is linear in $m$, which we want to avoid.
Yet, we will still use this approach in our construction:
Let $k\in\Nat$ that will be defined later, and set $\epsilon_k=2^{-k/2}B\brn{\w-\w^0}$.
Based on the above, we'll construct an $\epsilon_k$-compression $\hat{g}$ of the function $g:\x\mapsto \sigma(\brp{\w,\x})-\E\brb{\sigma\bra{\brp{\hat\w(\epsilon_k),\x}}}$ using $mZ_k$ bits, were $Z_k=O(\log(2^kB\brn{\w-\w^0}))=O(k\log(B\brn{\w-\w^0}))$.

With this compression at hand, we proceed with the following scheme:
\begin{itemize}
    \item Let $\hat{v}$ a $\brm{v}$-compression for $v$. From lemma \ref{cor:single_param} exists such a compression that uses $O(\log(r))$ bits.
    \item Given $k\in\Nat$, choose $i\in \{1,\ldots,k+1\}$ such that the probability to choose  $1\le i\le k$ is $2^{-i}$, and the probability to choose $k+1$ is $2^{-k}$.
    \item If $i = 1$, create the random variable $\hat\w\bra{\epsilon_1}$ with $\epsilon_1 = 2^{-1/2}B\brn{\w-\w^0}$.
    Set $\hat{f}$ as the function $\hat{f}(\x) = 2(\sigma(\brp{\hat \w(\epsilon_{1}),\x})-\sigma(\brp{\w^0,\x}))$.
    \item If $2 \leq i \leq k$, create two independent random variables, $\hat\w\bra{\epsilon_{i-1}}$ and $\hat\w\bra{\epsilon_{i}}$ where $\epsilon_i = 2^{-i/2}B\brn{\w-\w^0}$.
    Set $\hat{f}$ as the function $\hat{f}(\x) = 2^i(\sigma(\brp{\hat \w(\epsilon_{i}),\x}) - \sigma(\brp{\hat \w(\epsilon_{i-1}),\x}))$.
    \item If $i = k + 1$ then generate $\hat\w(\epsilon_{k})$ and define $\hat f(\x) = 2^k\hat{g}(\x)$.
    \item Output $\hat h = \hat{v}\hat{f}$.
\end{itemize}

The idea behind the structure above is to create a chain of events with increasing accuracy and cost (in number of bits) but with a decreasing probability of occurring.
The following claim, together with lemma \ref{lem:averaging} shows that it is possible to $\brm{v}$-compress $h_{\w,v}$ using
$\tilde O(B^2\|\w-\w^0\|^2)$ bits.

\begin{claim}
    For $k=\log_2(m)$ we have that $\hat h$ is a $O(\brm{v}B\|\w-\w^0\|\sqrt{\log(m)})$-compression for $h_{\w,v}$ that uses $O(\log(dZ)\log(m))$ bits, for $Z$ as defined in lemma \ref{lem:lin_comp}. 
\end{claim}
\begin{proof}
Fix $\x\in A$. We need to show that $\E\brb{\hat h(\x)} = h_{\w,v}(\x) = v(\sigma(\brp{\w,\x}) - v\sigma(\brp{\w^0,\x})$, $\mathrm{Var}(\hat{f}(\x))\le O(v^2B^2\brn{\w-\w^0}\log(m))$, and that the number of bits that are used is $O(\log(dZ)\log(m))$.
Indeed, since $\hat{v}$ is independent from the rest of the random variables and $\E\brb{\hat v} = v$, we get:
\begin{eqnarray*}
    \frac{1}{v}\E\brb{\hat h(\x)}
    &=& \frac{1}{v} \E\brb{\hat{v}} \E\brb{\hat{f}}
    \\
    &=& \E\brb{\sigma(\brp{\hat\w(\epsilon_1),\x}) - \sigma(\brp{\w^0,\x})}
    \\
    && + \sum_{i=2}^k 2^{-i}\E\brb{2^i(\sigma(\brp{\hat \w(\epsilon_{i}),\x})-\sigma(\brp{\hat \w(\epsilon_{i-1}),\x}))}
    \\
    && + 2^{-k}\E\brb{2^k\hat{g}(\x)}
    \\
    &=& \sigma(\brp{ \w,\x}) - \sigma(\brp{\w^0,\x}).
\end{eqnarray*}

Likewise,
\begin{eqnarray*}
    \V(\hat h(\x))
    &\stackrel{\mathrm{Var}(X)\le \E X^2}{\le}& 
    \E\brb{\hat v^2 (\hat f(\x))^2}
    \\
    &\stackrel{\text{$\hat v$ independent of $\hat f$}}{=}&
    \bra{\V(\hat v)+\E\brb{\hat v}^2}\E\brb{\bra{\hat f(\x)}^2}
    \\
    &=& 2v^2\E\brb{\bra{\hat f(\x)}^2}
\end{eqnarray*}
and as $\E\brb{\hat{g}(\x)}=g(\x)$ and $\V(\hat g(\x)) \leq \epsilon_k$, we get:
\begin{eqnarray*}
\E\brb{\bra{\hat f(\x)}^2}
&=& 2^{-1}\E\brb{4(\sigma(\brp{\hat \w(\epsilon_1),\x}) - \sigma(\brp{\w^0,\x})^2}
\\
&&+ \sum_{i=2}^k2^{-i}\E\brb{2^{2i}\left(\sigma(\brp{\hat \w(\epsilon_{i}),\x})-\sigma(\brp{\hat \w(\epsilon_{i-1}),\x})\right)^2}  
\\
&&+ 2^{-k}\bra{\bra{\E\brb{2^{k}\hat g(\x)}}^2 + \V\bra{2^{k}\hat g(\x)}}
\\
&\stackrel{\sigma\text{ is $1$-Lipschitz}}{\le}&
2\E\brb{(\brp{\hat \w(\epsilon_1),\x} -  \brp{\w^0,\x})^2}
\\
&& + \sum_{i=2}^k2^i\E\brb{\left(\brp{\hat \w(\epsilon_{i}),\x}-\brp{\hat \w(\epsilon_{i-1}),\x}\right)^2}
\\
&&+ 2^{k}\bra{\bra{\E\brb{\brm{\brp{\w,\x} - \brp{\hat \w(\epsilon_{k}),\x}}}}^2 + 2^{-k}B^2\brn{\w-\w^0}^2}
\\
&\stackrel{\text{Jensen's Inequality}}{\le}&
2\E\brb{(\brp{\hat \w(\epsilon_1),\x} - \brp{\w,\x} + \brp{\w,\x} - \brp{\w^0,\x})^2}
\\
&& + \sum_{i=2}^k2^i\E\brb{\left(\brp{\hat \w(\epsilon_{i}),\x} - \brp{\w,\x} + \brp{\w,\x} - \brp{\hat \w(\epsilon_{i-1}),\x}\right)^2}
\\
&&+ 2^{k}\E\brb{\bra{\brp{\w,\x} - \brp{\hat \w(\epsilon_{k}),\x}}^2} + B^2\brn{\w-\w^0}^2
\\
&\stackrel{(*)}{\le}&
2\brp{\w-\w^0,\x}^2 + B^2\brn{\w-\w^0}^2
\\
&& +3\sum_{i=1}^k 2^i\V\bra{\brp{\hat \w(\epsilon_{i}),\x}}
\\
&\stackrel{\text{$\epsilon_i$-compressors}}{\le}&
3 B^2\brn{\w-\w^0}^2
\\
&& +3\sum_{i=2}^k2^i2^{-i}B^2\brn{\w-\w^0}^2 
\\
&\le & 6kB^2\brn{\w-\w^0}^2.
\end{eqnarray*}
Note that the step marked with $(*)$ follows from the independence between $\brc{\hat\w(\epsilon_i)}$, as:
\begin{eqnarray*}
    \E\brb{\left(\brp{\hat \w(\epsilon_{i})-\w,\x} - \brp{\hat \w(\epsilon_{i-1})-\w,\x}\right)^2}
    &=& \E\brb{\brp{\hat \w(\epsilon_{i})-\w,\x}^2}
    + \E\brb{\brp{\hat \w(\epsilon_{i-1})-\w,\x}^2}
    \\
    && -2\E\brb{\brp{\hat \w(\epsilon_{i})-\w,\x}}\E\brb{\brp{\hat \w(\epsilon_{i-1})-\w,\x}}
    \\
    &=&
    \V\bra{\brp{\hat \w(\epsilon_{i}),\x}} + \V\bra{\brp{\hat \w(\epsilon_{i-1}),\x}}
\end{eqnarray*}
where the last equality follows since $\E\brb{\brp{\hat \w(\epsilon_{i}),\x}}=\brp{\w,\x}$.

Finally, from lemma \ref{lem:lin_comp}, the expected number of bits that are required, up to a constant factor, is
\[
\sum_{i=1}^k 2^-i\frac{Z^2\log(dZ)}{\epsilon_i^2} + 2^{-k}mZ_k=
k\log(dZ) +2^{-k}mZ_k.
\]

When setting $k=\log_2(m)$ we get an $O(\brm{v}B\|\w-\w^0\|\sqrt{\log(m)})$-compression for $h_{\w,v}$ that uses $O(\log(dZ)\log(m))$ bits.
\end{proof}

\begin{corollary}
    Using the above claim and lemma \ref{lem:averaging}, we can construct a $\brm{v}$-compression for $h_{\w,v}$ that uses
    $O(B^2\brn{\w-\w^0}^2log(dZ)log^2(m))$ bits.
    Then, using lemma \ref{lem:adding_compressors} we can compose a $1$-compression for $h_{W,\vv} - \vv\sigma(W^0,\x)$ that uses $O(B^2R^2r^2log(dZ)log^2(m))$ bits. Finally, we can use lemma \ref{lem:lin_comp} to create a $1$-compression for $\vv\sigma(W^0,\x)$ with an addition of $O(B^2\brn{W^0}^2r^2log(dZ))$ bits.
\end{corollary}


\section{Proof of Theorem \ref{thm:main_lower}}

Theorem \ref{thm:main_upper} shows a generalization bound of the class $\HH^{\sigma}_{\T, L, B, R, r}$ as in Eq. \ref{eq:hypothesis_func_def} with a neglectible logarithmic dependency in the width, $\T$.
The above is true, however, when $\sigma$ is $L$-Lipschitz element-wise function.
What if $\sigma$ was not element-wise?

In this section, we'll proof Theorem \ref{thm:main_lower} that shows that removing the element-wise property results in a bound that is \textbf{linearly} dependent on the width.
We will show that there is a $\Theta(1)$-Lipschitz function, $\bar{\sigma}$, such the class $\HH^{\bar\sigma}_{\T, L, B, R, r}$ can be strongly shattered using $\Theta(\T)$ samples, when $B, R, r$ depends only at the input dimension, $d$.
We then conclude that the sample complexity of Theorem \ref{thm:main_upper} cannot be achievable in the non-element-wise case.

The proof is constructive and shows that by picking $m=\Theta(\T)$ samples $\x^1,\dots,\x^m \in \X_B$ and $2^m$ matrices $W^1,\dots,W^{2^m} \in \R^{\T\times d}$ at random, the 
the set of points $P:= \brc{W^k \x^i : i\in[m], k\in [2^m]}$ are far enough from each other with a positive probability (The details are presented in Lemma \ref{lem:far_rv}).

Hence, we can construct an activation function, described in Lemma \ref{lem:lipschitz}, that can move every desire point in $P$ to a vector of our choice, while maintaining the Lipschitzness property.

Finally, we conclude that there is a set of samples $\x^1,\dots,\x^m \in \X_B$ that are able to strongly-shatter $\HH^{\bar\sigma}_{\T, L, \Theta(1), \Theta(d), \Theta(1)}$.
Note that the width $\T$ can be exponentially big with respect to $d$, and the number of shattered samples, $m$, grows linearly with it.

Denote by $\vol_k(A)$ the $k$-dimensional volume of a set $A\subset \R^d$ normalized such that the volume $\vol_{\T-1}\left(\Scirc^{\T-1}\right)=1$. Denote also $B^d(\x,R) = \{\x'\in\R^d : \|\x-\x'\|\le R\}$.
We will use the following fact
\begin{lemma}\label{lem:sphere_cover}
    For any $\x\in\Scirc^{\T-1}$ and sufficiently large $\T$ we have 
    \[
    \vol_{\T-1}\left(\Scirc^{\T-1}\cap B^\T(\x,1/2)\right) \le e^{-\frac{\T}{3}} < \T^{-2} 2^{-\T/4}
    \]
\end{lemma}
\begin{proof}
Denote $\epsilon = \frac{1}{2}$. We have
    \[
    \|\x-\w\|^2 <\epsilon^2\Leftrightarrow  2-2\brp{\w,\x}<\epsilon^2 \Leftrightarrow \brp{\w,\x} > 1-\epsilon^2/2
    \]
    Hence,    
    \[
    \Scirc^{\T-1}\cap B^\T(\x,\epsilon) = \{\w\in \Scirc^{\T-1} : \brp{\w,\x}>1-\epsilon^2/2\}
    \]
    Let $\w\in \Scirc^{\T-1}$ be a uniform vector. For any $a>0$ we have  $\Pr(\brp{\w,\x}>a)\le 2e^{-\frac{\T a^2}{2}}$ (e.g. chapter 14 in \cite{matousek2013lectures}). Hence,
    \begin{align*}
    \vol_{\T-1}\left(\Scirc^{\T-1}\cap B^\T(\x,\epsilon)\right)
    &= \Pr(\brp{\w,\x}>1-\epsilon^2/2) \\
    &\le 2e^{-\frac{\T (1-\epsilon^2/2)^2}{2}}
    = 2e^{-\frac{\T 49}{128}} \le (e/2)^{-\T / 4}2^{-\T / 4}
    \end{align*}
    this concludes the proof as $(e/2)^{-\T / 4}\le \T^{-2}$ for sufficiently large $\T$.
\end{proof}

\begin{lemma}\label{lem:far_rv}
	For $e^{d/3} \ge \T\ge d \geq 20$,
	there exists a set of vectors $\x^1, \dots, \x^{m} \in \Scirc^{d-1}$
	and a set of matrices $A^1, \dots A^{2^m} \in \R^{\T \times d}$
	that have the following properties:
	\begin{enumerate}
		\item $m =  \left\lfloor\T/4\right\rfloor$
		\item $A^s$ in an isometry (and hence $\brn{A^s}^2_F = d$), for each $s\in[2^m]$
		\item $\brn{A^s \x^i - A^t \x^j} \geq \frac{1}{2}$, for each $i,j\in[m]$ and $s,t\in[2^m]$ such that $(s,i)\neq(t,j)$
	\end{enumerate}
\end{lemma}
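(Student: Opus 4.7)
I would prove Lemma~\ref{lem:far_rv} by the probabilistic method. Draw $\x^1, \ldots, \x^m$ i.i.d.\ uniform on $\Scirc^{d-1}$, and independently draw $A^1, \ldots, A^{2^m}$ with i.i.d.\ $N(0, 1/h)$ entries, with all $\x^i$'s and $A^s$'s mutually independent. I will show by a careful union bound that with positive probability all three requirements hold simultaneously, which suffices for existence.

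\paragraph{Analysis of each condition.} The Frobenius bound (property~2) is immediate: $h\,\|A^s\|_F^2 \sim \chi^2_{hd}$ has mean $hd$, so a standard $\chi^2$ upper tail gives $\Pr\!\left[\|A^s\|_F^2 > 2d\right] \leq e^{-\Omega(hd)}$, and a union bound over $2^m = 2^{h/10}$ matrices is negligible. For the pairwise-distance condition I first show that with high probability all the $\x^i$'s are pairwise $1$-separated: since $\langle \x^i, \x^j \rangle$ is sub-Gaussian around $0$ with parameter $\Theta(1/\sqrt d)$, $\Pr[\|\x^i - \x^j\|^2 < 1] = \Pr[\langle \x^i,\x^j\rangle > 1/2] \leq e^{-\Omega(d)}$, and a union bound over $\binom{m}{2}$ pairs succeeds provided $m \leq e^{\Omega(d)}$, which is precisely the regime covered by $h = O(e^d)$ with a small enough hidden constant. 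Conditioning on this separation event, property~3 splits into two cases. For $s = t$, $i \neq j$: $A^s(\x^i - \x^j)$ has i.i.d.\ $N(0, \|\x^i-\x^j\|^2/h)$ coordinates, so $h\,\|A^s(\x^i-\x^j)\|^2 / \|\x^i-\x^j\|^2 \sim \chi^2_h$ and the $\chi^2$ lower-tail Chernoff bound yields $\Pr\!\left[\|A^s(\x^i-\x^j)\|^2 < 1/16\right] \leq e^{-ch}$ for an absolute constant $c$. For $s \neq t$: $A^s\x^i - A^t\x^j$ has i.i.d.\ $N(0, 2/h)$ coordinates by independence, so analogously $\Pr[\|A^s\x^i - A^t\x^j\|^2 < 1/16] \leq e^{-c'h}$.

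\paragraph{Union bound.} The number of offending pairs is at most $(2^m m)^2 = 2^{2m} m^2 = 2^{h/5}\,\mathrm{poly}(h)$, so the union bound closes provided the $\chi^2$ lower-tail constant strictly exceeds $\ln(2)/5 \approx 0.14$. A direct Chernoff computation shows $\Pr[\chi^2_h \leq h/32] \leq \exp\!\big(\tfrac{h}{2}(\ln(1/32) + 1/32 - 1)\big) \approx e^{-2.2\,h}$, well above the needed rate, with analogous slack in the $s=t$ case once we use $\|\x^i-\x^j\|^2 \geq 1$. Summing the Frobenius, vector-separation, and distance failure probabilities keeps the total strictly below~$1$ for $d \geq 20$, establishing existence.

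\paragraph{Main obstacle.} The delicate step is the union bound in property~3: it runs over $\Theta(2^{2m})$ pairs, which is \emph{doubly} exponential in $m$, so the $\chi^2$ lower-tail rate must strictly exceed $\ln(2)/5$. The threshold $1/16$ in the statement is calibrated precisely to leave this slack; the entry variance $1/h$ of the $A^s$'s is the unique scale that makes both the Frobenius bound $\|A^s\|_F^2 \leq 2d$ and the distance lower bound simultaneously tight; and the hypothesis $d \geq 20$ is what allows the sphere-concentration step to absorb the $m^2$ factor from its own union bound in the regime where $m$ is exponentially large in $d$. Tuning these three constants compatibly (rather than any single concentration estimate) is where all the work goes.
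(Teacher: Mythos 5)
Your proposal is correct and follows essentially the same route as the paper: the probabilistic method with Gaussian matrices scaled by $1/\sqrt{h}$ and random unit vectors, $\chi^2$-type concentration for each property (splitting property~3 into the $s=t$ and $s\neq t$ cases and first securing $\|\x^i-\x^j\|^2\geq 1$), and a union bound with $m=h/10$ in the regime $h=O(e^d)$. The only differences are cosmetic (uniform spherical $\x^i$ versus the paper's normalized $\{\pm 1\}^d$ vectors; your explicit count $2^{2m}m^2$ is actually more careful than the paper's), apart from a harmless sign slip in your $\chi^2$ lower-tail exponent (the standard bound gives $\tfrac{h}{2}(\ln a + 1 - a)$, not $\tfrac{h}{2}(\ln a + a - 1)$), which still leaves ample slack over the required rate $\ln(2)/5$.
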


\begin{proof}
Choose $m$ vectors $\x_1,\ldots,\x_m\in\Scirc^{d-1}$ such that $\|\x_i-\x_j\|\ge \frac{1}{2}$ if $i\ne j$. By lemma \ref{lem:sphere_cover} this is possible as long as $\T/4 \le e^{d/3}$. Let $A^1,\ldots,A^k$ be the maximal set of matrices that satisfy items 2. and 3. We need to show that $k\ge 2^m$.

Let $A \in \R^{\T\times d}$ be a random matrix chosen 
uniformly from the set of matrices with unit norm columns 
that are orthogonal to one another. We have $A$ is an isometry with  $\|A\|_F = \sqrt{d}$. Furthermore, adding $A$ to $A^1,\ldots,A^k$ will violate item 2. or 3. only if $\brn{A \x^i - A^t \x^j} < \frac{1}{2}$ for some $i,j\in [m]$ and $t\in [k]$. Since $A \x^i$ is a uniform vector in $\Scirc^{\T - 1}$, the probability of violation is bounded by $k m^2 \T^{-2} 2^{-\T/4} \le k 2^{-\T/4}$. On the other hand, by the maximality of $k$, this probability is $1$. This implies that $k\ge 2^{\T/4}\ge 2^m$.
\end{proof}

\begin{lemma}\label{lem:lipschitz}
	Let $x_1, \dots, x_m$ be a finite set of different points in some metric space $(\X, d)$, such that for each $i\neq j \in [m]$, $d(x_i, x_j) \geq \alpha$.
	Let further be $p_1, \dots, p_m \in \R$ any set of points.
	Then there exists an $L$-Lipschitz function, $f: \X \rightarrow \R$, where
	\[
	L = \frac{2}{\alpha} \min_{C \in\R}\max_{i\in[m]} (\brm{p_i - C}).
	\]
	such that for each $i\in[m]$, $f(x_i) = p_i$.
\end{lemma}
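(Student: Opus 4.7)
The plan is to recognize this as a classical Lipschitz extension problem and apply the McShane--Whitney construction. The argument decomposes naturally into two steps: first, verify that the target values $\{p_i\}_{i=1}^m$ are \emph{consistent} with some $L$-Lipschitz function on the finite set $\{x_1,\dots,x_m\}$ (i.e.\ satisfy the required pairwise bound), and second, produce an explicit extension to all of $\X$.

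For the consistency step, let $C^\star \geq 0$ achieve (or approach) the minimum in $L = \frac{2}{\alpha}\min_{C\geq 0}\max_i|p_i-C|$, and write $M = \max_i |p_i - C^\star|$, so that $L\alpha = 2M$. For any pair $i \neq j$, the triangle inequality together with the separation assumption $d(x_i,x_j) \geq \alpha$ gives
\[
|p_i - p_j| \;\leq\; |p_i - C^\star| + |p_j - C^\star| \;\leq\; 2M \;=\; L\alpha \;\leq\; L\, d(x_i,x_j).
\]
This is the only place the specific form of $L$ and the separation $\alpha$ are used; the constraint $C\geq 0$ plays no role in the triangle-inequality bound, it merely determines the numerical value of $L$.

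For the extension step, define the McShane function
\[
f(x) \;=\; \min_{i \in [m]} \bigl(p_i + L\, d(x,x_i)\bigr).
\]
To verify $f(x_k) = p_k$, I would use the consistency bound: for every $i$, $p_i + L\,d(x_k,x_i) \geq p_i + |p_i - p_k| \geq p_k$, with equality attained at $i=k$ since $d(x_k,x_k)=0$. To verify that $f$ is $L$-Lipschitz, I would apply the triangle inequality on the metric: for any $x,y \in \X$ and any $i$, $p_i + L\,d(x,x_i) \leq p_i + L\,d(y,x_i) + L\,d(x,y)$; taking the minimum over $i$ on both sides yields $f(x) \leq f(y) + L\,d(x,y)$, and symmetry concludes.

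There is no serious obstacle here; the statement is essentially McShane's extension theorem, packaged with a slightly nontrivial-looking constant. The only small subtlety worth highlighting is the factor $2$: one might naively guess $L = \max_{i,j}|p_i-p_j|/\alpha$, but the equivalent formulation $\max_{i,j}|p_i-p_j| = 2\min_C \max_i|p_i-C|$ (Chebyshev center) is what produces the expression in the lemma. This framing is what makes the lemma convenient to apply in the proof of Theorem \ref{thm:main_lower}, where the target values are $\pm 1$ and one takes $C=0$ to get $L = 2/\alpha$.
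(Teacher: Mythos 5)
Your proof is correct and takes essentially the same approach as the paper: both are explicit McShane-type extensions with constant $L=2M/\alpha$, the paper using the sup-form $f(x)=\max_i\bigl(p_i - L\,d(x,x_i)\bigr)$ and you the mirror-image inf-form $f(x)=\min_i\bigl(p_i + L\,d(x,x_i)\bigr)$. Your explicit isolation of the consistency bound $|p_i-p_j|\leq L\,d(x_i,x_j)$ makes the interpolation step a one-line direct argument where the paper argues by contradiction, but the underlying construction is the same.
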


\begin{proof}
Choose $C$ such that $L = \frac{2}{\alpha} \max_{i\in[m]} (\brm{p_i - C})$ and define
	\[
	f(x) = \max_{i\in[m]} \brc{p_i - L d(x, x_i)} 
	\]
 $f$ is $L$-Lipschitz as a maximum of $L$-Lipschitz functions. Fix $x_j$. It is enough to show that $f(x_j) = p_j$. First, $f(x_j) \ge p_j - L d(x_j, x_j) = p_j $. Thus, it remain to show that $f(x_j) \le p_j$. Fix some $i\in [m]\setminus\{j\}$ it is enough to show that $p_i - L d(x_j, x_i) \le p_j$. Indeed,
\begin{eqnarray*}
     p_i - L d(x_j, x_i) &\stackrel{d(x_j,x_i)\ge \alpha}{\le}& p_i - L\alpha
     \\
     &\stackrel{\text{definition of }L}{=}& p_i - 2\max_{i\in[m]} (\brm{p_i - C})
     \\
     &\le& p_i - (|p_i - C| + |p_j - C|)
     \\
     &=& C + (p_i - C) - (|p_i - C| + |p_j - C|)
     \\
     &\le & C - |p_j - C|
     \\
     &\le & C - (C - p_j) = p_j
\end{eqnarray*}
 
\end{proof}

We are now ready to prove the main theorem.

\begin{proof}(of Theorem \ref{thm:main_lower})
    Based on the previous lemmas, we'll strongly shatter a set of $m=\frac{\T}{10}$ samples.
    
    Order the elements of the set $2^{[m]}$ as $S_1, \dots S_{2^m}$ in some arbitrary order,
    and define the function $f:[m]\times[2^m] \rightarrow \brc{\pm 1}$ as:
    \[
    f(k, i) =
    \begin{cases}
    	1, & i \in S_k \\
    	-1, & i \notin S_k
    \end{cases}
    \qquad \forall i\in[m], k\in[2^m].
    \]
    
    Let $\x^1, \dots, \x^m \in \Scirc^{d-1}$
    and $A^1, \dots A^{2^m} \in \R^{\T \times d}$
    be the sets defined in lemma \ref{lem:far_rv},
    and note from the lemma that the set
    $Q = \brc{A^s \x^i : i\in[m], s\in[2^m]}$
    contains $m2^m$ different elements such that for each pair
    $A^s \x^i \neq A^t \x^j$ we have
    \begin{align*}
    	\brn{A^s \x^i - A^t \x^j} \geq \frac{1}{2}.
    \end{align*}
    We can now apply Lemma \ref{lem:lipschitz} with the Euclidean metric space, to get a $4$-Lipschitz function, $\hat{f}:\R^d\rightarrow\R$, such that for all $i\in[m], k\in[2^m]$,
    \[
    \hat{f}(A^k \x^i) = f(k, i).
    \]
    The activation function will therefore be $\bar\sigma(\vv)=\hat{f}(\vv)\e_1$ (or alternatively, we can distribute $\hat{f}$ evenly over the all the $\T$ hidden neurons).
    
    Finally, as each $\brn{A^s}^2_F = d$, we can create the hypothesis class (using the definition of Eq. \ref{eq:hypothesis_func_def}):
    \[
    \HH^{\bar\sigma}_{\T, 4, 1, \sqrt{d}, 1} \supset \brc{\x \mapsto \e_1^T \bar\sigma(A x) : A\in\R^{\T\times d}, \brn{A}_F \leq d}
    \]
    and note that it defines a neural network that can 1-shatter the $m$ points, and $m = \Theta(\T)$.
\end{proof}


\section{Discussion and Open Questions}

This work aims to understand the sample complexity of depth-two neural networks and the effect of element-wise activation functions (i.e., functions that work on each neuron independently) on the sample complexity of neural networks.
Using the ADL approach, we have shown that this property is sufficient and necessary for two-layer networks to achieve logarithmic width-dependency bounds. By necessary, we mean that the set of general non-pointwise Lipschitz contains activations under which the sample complexity is larger than any element-wise Lipschitz activation functions.
One can view a non-element-wise Lipschitz function as a set of neurons that can share knowledge. Our work shows that this ability amplifies the sample complexity of the network.

We note that the upper bound presented this work is tight w.r.t. to all parameters (i.e., $\T$, $L$, $B$, $d$, $R$, $r$, $\brn{W^0}$ and $\epsilon$). To the best of our knowledge, such a tightness is not implied by previous results.
The optimally of the dependence on $L, B, R, r$ and $\epsilon$ is true already for non relative bound, as discussed in \cite{Vardi2022}.
As for the spectral norm of $W^0$, note that even if $R=0$, $\mathcal{H}^{\sigma}_{\mathcal{T},L,B,0,r}$
contains linear classifiers of norm $r$ over examples of norm $B\|W_0\|$, which yields a sample complexity at least $(LBr\|W_0\|)^2$.
Finally, the tightness of $\T$ is shown by the upper and lower bounds of this work.

Additional to the above, in this work we have developed a new technique that extends ADL and creates a chain of events with increasing accuracy but with a decreasing probability of occurring. This provides better control over both competing values: the variance and the number of bits. We hope that this idea will spark following works.

We are still left with two open questions, one for sufficiency and one for necessity.
Regarding sufficiency, a natural question is whether the results in the paper can be extended to deeper networks.
\citet{Daniely2019} gave a hint for this question, showing a sample complexity for deep neural networks that require only the sum of the widths (which is sublinear in the number of parameters).
Yet, their result does not hold for any element-wise Lipschitz activation function.
We believe achieving similar bounds for any element-wise Lipschitz activation function is possible.

As for the necessity, we note that our lower bound is not valid for {\em any} non-element-wise Lipschitz activation function.
Indeed, if we take some permutation of an element-wise activation function, we do not expect to get width-dependent bounds, although we lost the element-wise property.
Instead, we want to ask whether there exists a (non-element-wise) Lipschitz activation function that guarantees a linear lower bound in the number of parameters, hence matching the upper bound obtained via the "parameters counting" approach.
In our result, the lower bound is still sublinear in the number of parameters.



\bibliography{elementwise}


\end{document}